\documentclass[11pt,letterpaper]{article}
\usepackage[T1]{fontenc}
\usepackage{lmodern}
\usepackage[margin=1in]{geometry}
\usepackage{microtype}
\usepackage[hyphens]{url}
\usepackage{graphicx}
\usepackage{tikz}
\usepackage{listofitems}
\usepackage[round]{natbib}
\let\cite\citep

\usepackage{amsmath,amssymb,amsthm,mathtools}
\usepackage{bm}
\usepackage{booktabs}
\usepackage{algorithm}
\usepackage{algorithmic}
\usepackage{needspace}
\usetikzlibrary{arrows.meta,calc}
\usepackage[hidelinks]{hyperref}
\hypersetup{
    pdftitle={Candidate Retention for Abductive Learning},
    pdfauthor={Hao-Yuan He, Yu Liu, and Ming Li}
}

\newtheorem{theorem}{Theorem}
\newtheorem{proposition}{Proposition}

\theoremstyle{definition}
\newtheorem{example}{Example}

\newcommand{\x}{{\bm{x}}}

\renewcommand{\c}{{\bm{c}}}
\newcommand{\s}{{\bm{s}}}
\newcommand{\X}{{\cal X}}
\newcommand{\Y}{{\cal Y}}
\newcommand{\Z}{{\cal Z}}
\newcommand{\KB}{\text{KB}}
\newcommand{\Ind}{\mathbb{I}}
\newcommand{\Exp}{{\mathbb E}}
\newcommand{\Ls}{{\mathcal L}}
\newcommand{\TV}{\mathrm{TV}}
\newcommand{\Unc}{\mathrm{U}}
\newcommand{\AthreeBL}{\ensuremath{\text{A}^{3}\text{BL}}~}
\newcommand*{\img}[1]{%
    \raisebox{-.2\baselineskip}{%
        \includegraphics[
        height=0.9\baselineskip,
        width=0.9\baselineskip,
        keepaspectratio,
        ]{#1}%
    }%
}
\newcommand{\IMGI}{{\img{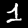}}}
\newcommand{\IMGII}{{\img{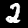}}}

\newlength{\dlen}
\newlength{\stdlen}
\newcommand{\msc}[2]{%
    \setsepchar{.}%
    \readlist*\themean{#1}%
    \readlist*\thestd{#2}%
    \ensuremath{{\mathmakebox[2\dlen][r]{\themean[1]}}.\mathmakebox[2\dlen][l]{\themean[2]}_{\pm \mathmakebox[\stdlen][l]{\thestd[1].\thestd[2]}}}%
}

\title{Candidate Retention for Abductive Learning}
\author{Hao-Yuan He \quad Yu Liu \quad Ming Li\\[0.5em]
    \small School of Artificial Intelligence, Nanjing University\\
    \small National Key Laboratory for Novel Software Technology, Nanjing University\\[0.2em]
    \small\texttt{\{hehy, liuy, lim\}@lamda.nju.edu.cn}}
\date{}

\begin{document}
\maketitle

\begin{abstract}
Abductive learning combines neural perception with symbolic reasoning, using explanations generated by abduction to supervise the perception model. Multiple valid explanations of the same symbolic target can assign conflicting labels to the same inputs. Common policies select a single candidate as a pseudo-label, which may reinforce mistaken assignments, or weight all candidates, which may spread supervision across competing labels. These risks motivate selecting a retained subset to balance supervision sharpness and model-mass coverage. To guide this choice, we bound the coordinate-level supervision error using retained uncertainty, discarded model mass, and model mismatch. For a fixed model and training pair, only the first two terms depend on the retained set. We propose Abductive Candidate Retention (ACR), which uses these terms to guide greedy additions, accepting a candidate when its recovered mass exceeds the increase in retained uncertainty. Experiments show that ACR improves concept accuracy over single-candidate baselines and \AthreeBL in most evaluated aggregated mod-addition settings. Objective ablations support the joint use of uncertainty and posterior mass.
\end{abstract}

\section{Introduction}

Neuro-symbolic learning integrates neural perception with symbolic reasoning to learn from data and background knowledge.
Abductive learning (ABL,~\citealt{zhou2019abductive}) combines neural perception with symbolic reasoning through abduction~\cite{Magnani09Abductive}: a perception model proposes latent concepts from raw inputs, and when these proposals are inconsistent with the observed evidence, abduction generates candidate concept assignments that entail it.
These candidates then supervise the perception model.

However, the background knowledge does not always determine a unique candidate.
Consider \(\mathtt{SUM(\IMGI,\IMGII)=3}\): the addition rule admits \(\mathtt{[\IMGI=0,\IMGII=3]}\), \(\mathtt{[\IMGI=3,\IMGII=0]}\), and other assignments.
Because different candidates label the same images differently, they create \emph{ambiguity} in the concept-level supervision.

\begin{figure}[t]
    \centering
    \includegraphics[width=\textwidth]{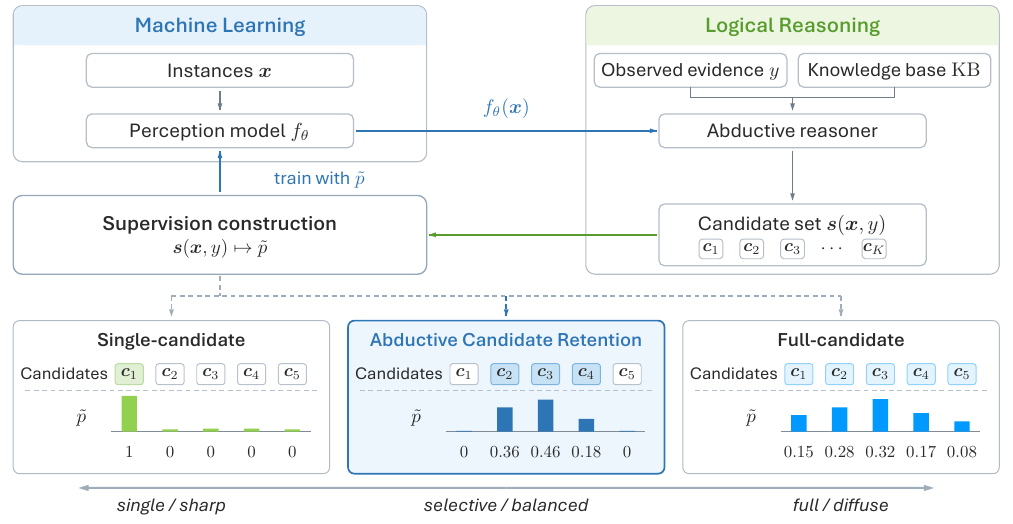}
    \caption{\emph{Candidate retention in the ABL loop.}
        \textbf{Upper:} abduction returns \(\s(\x,y)\); supervision construction marginalizes the retained candidates' weighted labels to train the perception model.
        \textbf{Lower:} three retention policies with schematic candidate weights \(\tilde p\). Shading indicates which candidates are retained.}
    \label{fig:overview}
\end{figure}

Classical ABL selects one candidate as a pseudo-label using the current model~\cite{dai_abl_2019}. \AthreeBL normalizes all valid candidates into a posterior and marginalizes it into soft labels~\cite{he2024a3bl}. Selecting one explanation yields sharp labels but can reinforce a mistaken assignment. Retaining more explanations preserves alternatives, while their disagreements can spread supervision across competing labels. We call the choice of which candidates to include \emph{candidate retention}.

A candidate's contribution to supervision depends on the label support already retained. Two candidates with equal weights can therefore change supervision differently: one may reinforce supported labels, while the other adds competing labels. This combination-dependent effect motivates evaluating each addition through the supervision formed by the retained set.

We propose Abductive Candidate Retention (ACR), which evaluates additions through their recovered model mass and change in coordinate uncertainty (Figure~\ref{fig:overview}). An upper bound on coordinate supervision error contains retained uncertainty, discarded model mass, and model mismatch. For a fixed model and training pair, only the first two depend on the retained set. Let \(M_\theta(R;\x,y)\) denote the retained mass and \(\Unc_\theta(R;\x,y)\) the average coordinate uncertainty. We score a retained set \(R\) by
\[
J_\theta(R;\x,y)
=\Unc_\theta(R;\x,y)+1-M_\theta(R;\x,y).
\]
ACR starts from the highest-weight candidate and adds the candidate that most decreases this criterion, stopping when no addition improves it. Marginalizing the retained weights then supplies concept targets for the neural update.

Our contributions:
\begin{enumerate}
    \item We analyze candidate retention through the concept supervision induced by a subset and give an upper bound on its coordinate-level error.
    \item We develop a greedy rule that evaluates added candidate mass together with changes in coordinate uncertainty, adapting the retained set's composition and size.
    \item We show concept-accuracy gains on most evaluated mod-addition settings, with controlled comparisons and training diagnostics, and evaluate ACR on chess attack and judicial sentencing.
\end{enumerate}

\section{Problem Setup}
\label{sec:preliminaries}

Let \(\X\) be the input space, \(\Z\) a finite concept set, and \(\Y\) the
target space. In typical ABL tasks, a single symbolic target constrains
multiple inputs jointly. Each training example therefore contains an input
sequence \(\x=(x_j)_{j=1}^m\in\X^m\) and an observed target \(y\in\Y\). The concept
\(z_j\in\Z\) for each input \(x_j\) is hidden.

ABL~\cite{zhou2019abductive} combines a perception model
with background knowledge \(\KB\). For each input \(x\), the model assigns a
probability \(f_\theta(k\mid x)\) to every concept \(k\in\Z\).

The knowledge base defines the candidates that entail the observed target:
\begin{equation*}
    \s(\x,y)=\{\c\in\Z^m:\c\land\KB\models y\}.
\end{equation*}
Training reveals \(y\), but it does not identify which \(\c\in\s(\x,y)\)
matches the inputs. ABL uses these valid candidates to learn the hidden
concepts.

\begin{example}
    Consider \texttt{SUM(}\IMGI\texttt{,}\IMGII\texttt{)=3}. The input contains
    two unlabeled digit images, and \texttt{3} is their observed sum. The valid
    candidates are \texttt{[0,3]}, \texttt{[1,2]}, \texttt{[2,1]}, and
    \texttt{[3,0]}. Only one describes the image contents, but its digit labels
    are unavailable during training.
\end{example}

\subsection{Existing ABL Supervision}
Classical ABL selects one valid candidate and uses it as
supervision~\cite{dai_abl_2019}. It commonly ranks candidates by confidence or
Hamming distance. Under the factorized perception model, candidate confidence is
\begin{equation*}
p_\theta(\c\mid\x)=\prod_{j=1}^m f_\theta(c_j\mid x_j).
\end{equation*}
A Hamming score measures the negative distance between \(\c\) and the predicted
concept sequence. Writing either score generically as \(\operatorname{score}_\theta\), ABL selects
\begin{equation*}
\hat{\c}_\theta\in
\arg\max_{\c\in\s(\x,y)}
\operatorname{score}_\theta(\c;\x).
\end{equation*}
The selected candidate provides hard labels for updating \(f_\theta\). The
updated model then reranks the candidates.

\AthreeBL keeps all valid candidates and weights them by
confidence~\cite{he2024a3bl}. Their total confidence is
\begin{equation*}
Z_\theta(\x,y)=\sum_{\c\in\s(\x,y)}p_\theta(\c\mid\x).
\end{equation*}
Normalizing by \(Z_\theta(\x,y)\) gives the candidate posterior
\begin{equation*}
w_\theta(\c\mid\x,y)
=\frac{p_\theta(\c\mid\x)}{Z_\theta(\x,y)},
\qquad \c\in\s(\x,y).
\end{equation*}
We assume \(\s(\x,y)\neq\varnothing\) and \(Z_\theta(\x,y)>0\).

\AthreeBL marginalizes this posterior at each sequence position to construct
soft concept targets for \(f_\theta\).

Classical ABL and \AthreeBL provide the single-candidate and posterior-marginalization reference policies. We next consider how the retained subset shapes concept supervision and use this information to guide selection.

\section{Abductive Candidate Retention}
\label{sec:method}

ACR scores retained sets, converts them into concept targets, and constructs them by greedy selection. Throughout this section, \(w_\theta(\c\mid\x,y)\) denotes a normalized model-induced distribution over the nonempty valid set \(\s(\x,y)\). The product posterior above is one choice; task-specific scores are given in Appendix~\ref{app:experimental_protocol}.

\subsection{A Retention Criterion}
\label{sec:retention-risk}

For a training pair, let \(R\subseteq\s(\x,y)\) be the retained candidate set. Conditioning the candidate distribution on \(R\) gives its retained mass and conditional distribution:

\begin{equation*}
\begin{aligned}
M_\theta(R;\x,y)
&=\sum_{\c\in R}w_\theta(\c\mid\x,y),\\
q_{\theta,R}(\c\mid\x,y)
&=\frac{w_\theta(\c\mid\x,y)}{M_\theta(R;\x,y)}\Ind[\c\in R],
\end{aligned}
\end{equation*}
where \(M_\theta(R;\x,y)>0\).

At position \(j\), the marginal of \(q_{\theta,R}\) is
\begin{equation*}
\pi_{\theta,R,j}(k)
=\Pr_{\c\sim q_{\theta,R}(\cdot\mid\x,y)}(c_j=k).
\end{equation*}

Retained uncertainty averages one minus the largest marginal probability across positions:
\begin{equation*}
\Unc_\theta(R;\x,y)
=\frac1m\sum_{j=1}^m
\left(1-\max_{k\in\Z}\pi_{\theta,R,j}(k)\right).
\end{equation*}
A lower \(\Unc_\theta(R;\x,y)\) gives sharper concept targets. The discarded
posterior mass is \(1-M_\theta(R;\x,y)\).

To relate the retained supervision to concept error, let \(P^\star(\cdot\mid\x,y)\) be the unknown true distribution over valid candidates. It is a point mass when each input sequence has one true candidate.

The retained prediction at position \(j\) is
\begin{equation*}
\hat z_{R,j}(\x,y)\in
\arg\max_{k\in\Z}\pi_{\theta,R,j}(k).
\end{equation*}
Its ideal risk is the expected coordinate error under \(P^\star\):
\begin{equation*}
\mathcal R^\star(R;\x,y)
=\Exp_{\c\sim P^\star(\cdot\mid\x,y)}
\left[\frac1m\sum_{j=1}^m
\Ind[\hat z_{R,j}(\x,y)\neq c_j]\right].
\end{equation*}

Since \(P^\star\) is unknown, we compare the induced supervision with this distribution through the following upper bound.

\begin{theorem}
\label{thm:retained-risk}
For any positive-mass set \(R\subseteq\s(\x,y)\),
\begin{equation*}
\begin{aligned}
\mathcal R^\star(R;\x,y)
&\le \Unc_\theta(R;\x,y)
+\TV\!\left(P^\star,w_\theta\right)\\
&\quad+1-M_\theta(R;\x,y),
\end{aligned}
\end{equation*}
where \(\TV\) denotes total variation distance and both distributions are
conditioned on \((\x,y)\).
\end{theorem}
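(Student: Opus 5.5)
The plan is a chain of three comparisons: from \(P^\star\) to \(w_\theta\), from \(w_\theta\) to \(q_{\theta,R}\), and from \(q_{\theta,R}\) to its own coordinate marginals. The starting observation is that the per-coordinate error indicator is a bounded function of the candidate. Define
\[
g(\c)=\frac1m\sum_{j=1}^m \Ind[\hat z_{R,j}(\x,y)\neq c_j],
\]
so that \(0\le g\le 1\). The retained predictions \(\hat z_{R,j}\) are fixed once \(R\), \(\theta\) and \((\x,y)\) are fixed, so \(g\) is a deterministic function on \(\s(\x,y)\), and \(\mathcal R^\star(R;\x,y)=\Exp_{P^\star}[g]\).

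\textbf{Step 1 (model mismatch).} For any function with values in \([0,1]\) and any two distributions \(P,Q\) on the finite set \(\s(\x,y)\), the standard inequality \(|\Exp_P g-\Exp_Q g|\le \TV(P,Q)\) holds. Applying it with \(P=P^\star\) and \(Q=w_\theta\) gives
\[
\Exp_{P^\star}[g]\le \Exp_{w_\theta}[g]+\TV(P^\star,w_\theta).
\]

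\textbf{Step 2 (discarded mass).} Write \(w_\theta = M\,q_{\theta,R}+(1-M)\,r\), where \(M=M_\theta(R;\x,y)\) and \(r\) is \(w_\theta\) conditioned on the complement of \(R\). If \(M=1\), the second term is simply absent. Then
\[
\Exp_{w_\theta}[g]=M\,\Exp_{q_{\theta,R}}[g]+(1-M)\,\Exp_r[g]\le \Exp_{q_{\theta,R}}[g]+1-M,
\]
using \(M\le 1\), \(\Exp_{q_{\theta,R}}[g]\ge 0\) and \(g\le 1\). An equivalent route is to note that \(\TV(w_\theta,q_{\theta,R})=1-M\) and reuse Step 1.

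\textbf{Step 3 (retained uncertainty).} By linearity,
\[
\Exp_{q_{\theta,R}}[g]=\frac1m\sum_j \Pr_{q_{\theta,R}}\bigl(c_j\neq \hat z_{R,j}\bigr)=\frac1m\sum_j\bigl(1-\pi_{\theta,R,j}(\hat z_{R,j})\bigr).
\]
Since \(\hat z_{R,j}\) is an argmax of \(\pi_{\theta,R,j}\), each summand equals \(1-\max_k\pi_{\theta,R,j}(k)\), and the sum is exactly \(\Unc_\theta(R;\x,y)\). Chaining Steps 1–3 yields the stated bound.

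\textbf{Main obstacle.} I do not expect a real obstacle. The only points needing care are these:
\begin{itemize}
\item \(g\) is defined using \(R\)-dependent predictions, but it does not depend on the random candidate, so the TV inequality applies.
\item The TV convention is \(\sup_A|P(A)-Q(A)|\), i.e.\ half the \(\ell_1\) norm, which is the one that gives constant \(1\) for \([0,1]\)-valued functions.
\item \(P^\star\) is supported on \(\s(\x,y)\), so both distributions live on the same finite space.
\end{itemize}
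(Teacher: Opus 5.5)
Your proof is correct and takes essentially the same route as the paper. Both use the bounded-loss TV inequality, the identity \(\TV(w_\theta,q_{\theta,R})=1-M_\theta(R;\x,y)\) (which your mixture decomposition in Step 2 reproduces), and the fact that the expected coordinate loss under \(q_{\theta,R}\) equals \(\Unc_\theta(R;\x,y)\). The only cosmetic difference is in how the chain is assembled: the paper compares \(P^\star\) to \(q_{\theta,R}\) directly and splits \(\TV(P^\star,q_{\theta,R})\) by the triangle inequality, while you chain the two expectation comparisons through \(w_\theta\).
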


\begin{proof}
Write \(P=P^\star\), \(w=w_\theta\), \(q_R=q_{\theta,R}\), and
\(M_R=M_\theta(R)\), suppressing \((\x,y)\).
The coordinate loss \(\ell_R(\c)=m^{-1}\sum_j\Ind[\hat z_{R,j}\neq c_j]\)
lies in \([0,1]\), so
\begin{equation*}
\Exp_P[\ell_R]\le \Exp_{q_R}[\ell_R]+\TV(P,q_R).
\end{equation*}
The first term equals \(\Unc_\theta(R)\). On \(R\), \(q_R=w/M_R\ge w\);
outside \(R\), \(q_R=0\).
The absolute differences sum to \(1-M_R\) on each part, so
\(\TV(w,q_R)=1-M_R\). By the triangle inequality,
\begin{equation*}
\begin{aligned}
\TV(P,q_R)&\le \TV(P,w)+\TV(w,q_R)\\
&=\TV(P,w)+1-M_R.
\end{aligned}
\end{equation*}
Combining the two inequalities proves the result.
\end{proof}

\paragraph{Interpretation.}
Under \(q_{\theta,R}\), the coordinate error equals retained uncertainty. Model mismatch and discarded mass bound the change in expected error when moving to \(P^\star\). Thus retained mass describes coverage under the model distribution, and its relation to the true explanation depends on model mismatch. For a fixed model and training pair, the mismatch term is constant across \(R\). ACR uses retained uncertainty and discarded mass, the two subset-dependent terms in the bound, to score candidate additions:
\begin{equation*}
J_\theta(R;\x,y)
=\Unc_\theta(R;\x,y)+1-M_\theta(R;\x,y).
\end{equation*}
Let \(C_K\) contain the \(\min(K,|\s|)\) highest-weight candidates, with fixed tie-breaking. Weights remain normalized over \(\s\), so discarded mass includes candidates outside this search pool. The full valid set globally minimizes \(J_\theta\); the criterion alone does not favor partial retention. ACR uses it to evaluate individual additions along a greedy path from the highest-weight singleton and stops when no addition improves it. This local rule can yield intermediate subsets. Their benefit over the full set is evaluated empirically and is not guaranteed by the bound.

\subsection{Retained Concept Supervision}
\label{sec:retained-supervision}

At round \(t\), the current parameters \(\theta_t\) and retained set \(R_t\) define soft targets \(\pi_{t,j}(k)=\pi_{\theta_t,R_t,j}(k)\) and mass \(M_t=M_{\theta_t}(R_t;\x,y)\). Holding these quantities fixed during the neural update gives the normalized cross-entropy loss
\begin{equation*}
\Ls_{\mathrm{ret}}(\theta;\pi_t)
=\sum_{j=1}^m\sum_{k\in\Z}
\pi_{t,j}(k)[-\log f_\theta(k\mid x_j)].
\end{equation*}

Marginalization pools label support across joint explanations. Several candidates can agree at one position while assigning different labels elsewhere; their weights add to the same target entry at that position. Every input in a sequence receives a target derived from the same retained distribution. A singleton produces one-hot targets, while retaining the full valid set recovers its full-distribution marginals. Intermediate sets preserve the alternatives that contribute to the selected supervision.

In aggregated mod addition, the implementation uses the original retained weights to form the targets:
\begin{equation*}
\begin{aligned}
a_{t,j}(k)
&=\sum_{\c\in R_t}w_{\theta_t}(\c\mid\x,y)\Ind[c_j=k]\\
&=M_t\pi_{t,j}(k).
\end{aligned}
\end{equation*}
Its loss is \(M_t\Ls_{\mathrm{ret}}(\theta;\pi_t)\), with \(M_t\) also fixed during the update. The conditional marginals specify the relative label distribution, and retained mass weights the training example. Appendix~\ref{app:experimental_protocol} gives the scoring and weighting conventions. The standard EM connection for fixed sets, product-posterior weights, and normalized targets is described in Appendix~\ref{app:restricted-em}.

\subsection{Greedy Candidate Retention}
\label{sec:greedy-retention}

We construct \(R\subseteq C_K\) by forward selection, starting with the highest-weight candidate. The set expands only when adding a candidate lowers the criterion.

To see how a candidate changes supervision, write \(M=M_\theta(R;\x,y)\), \(w_{\c}=w_\theta(\c\mid\x,y)\), and \(\pi_j=\pi_{\theta,R,j}\). Adding \(\c\notin R\) yields marginals \(\pi_j^+=\pi_{\theta,R\cup\{\c\},j}\) with
\[
\pi_j^+(k)=\frac{M\pi_j(k)+w_{\c}\Ind[c_j=k]}{M+w_{\c}}.
\]
The candidate contributes mass to its assigned label at each position. The effect on uncertainty depends on where this support falls relative to the existing label masses. Two candidates with equal weights can therefore produce different changes in uncertainty. A weight-only top-\(k\) rule ranks them independently of the current retained set. ACR evaluates their effects on the supervision already formed by that set, using the following change in the criterion.

For \(\c\in C_K\setminus R\), define
\begin{equation*}
\begin{aligned}
\Delta_{\c}J_\theta(R)
&\equiv J_\theta(R\cup\{\c\};\x,y)-J_\theta(R;\x,y)\\
&=\Unc_\theta(R\cup\{\c\};\x,y)-\Unc_\theta(R;\x,y)\\
&\quad-w_\theta(\c\mid\x,y).
\end{aligned}
\end{equation*}
The first difference measures the change in retained ambiguity. The final term
rewards the posterior mass recovered by \(\c\).

At each step, we select the candidate with the smallest
\(\Delta_{\c}J_\theta(R)\). We add it if this value is negative and stop
otherwise.

\begin{algorithm}[h]
\caption{Greedy Candidate Retention}
\label{alg:greedy_retention}
\begin{algorithmic}[1]
\REQUIRE Pool \(C_K\) and full-distribution weights \(w_\theta(\c\mid\x,y)\)
\ENSURE Retained set \(R\)
\STATE Choose \(\c_0\in\arg\max_{\c\in C_K}w_\theta(\c\mid\x,y)\).
\STATE Set \(R\leftarrow\{\c_0\}\).
\WHILE{\(R\neq C_K\)}
    \STATE Choose
    \[
    \c^\dagger\in
    \arg\min_{\c\in C_K\setminus R}\Delta_{\c}J_\theta(R).
    \]
    \IF{\(\Delta_{\c^\dagger}J_\theta(R)\geq 0\)}
        \STATE \textbf{break}
    \ENDIF
    \STATE Update \(R\leftarrow R\cup\{\c^\dagger\}\).
\ENDWHILE
\RETURN \(R\).
\end{algorithmic}
\end{algorithm}

Each accepted step strictly lowers \(J_\theta\) and adds one candidate, so the algorithm terminates at a one-addition local optimum over \(C_K\). With maintained coordinate label masses, selection takes \(O(N^2m)\) time in the worst case, where \(N=|C_K|\le K\). At round \(t\), the selected \(R_t\) defines the fixed targets in Section~\ref{sec:retained-supervision}. Candidate weights and retention are recomputed after the neural update.

\begin{table}[t]
\centering
\caption{Character accuracy on digit addition (\%; mean $\pm$ std).
N/A: did not finish within 24 hours.}
\label{tab:addition}
\footnotesize
\setlength{\tabcolsep}{2.5pt}
\renewcommand{\arraystretch}{1.16}
\settowidth{\stdlen}{\(\scriptstyle 00.00\)}
\renewcommand{\msc}[2]{\ensuremath{#1_{\pm\mathmakebox[\stdlen][l]{#2}}}}
\begin{tabular*}{\textwidth}{@{\extracolsep{\fill}}lcccccccc@{}}
\toprule
& \multicolumn{4}{c}{MNIST} & \multicolumn{4}{c}{KMNIST} \\
\cmidrule(lr){2-5}\cmidrule(lr){6-9}
Method & \(n=1\) & \(n=2\) & \(n=3\) & \(n=4\)
& \(n=1\) & \(n=2\) & \(n=3\) & \(n=4\) \\
\midrule
DeepStochLog
& \msc{99.00}{0.13} & \msc{98.88}{0.02} & \msc{98.94}{0.15} & \msc{98.98}{0.13}
& \msc{94.00}{0.73} & \msc{94.06}{0.30} & \msc{94.00}{0.29} & \msc{93.93}{0.49} \\
NeurASP
& \msc{50.01}{0.44} & \msc{10.21}{0.39} & \msc{10.21}{0.39} & N/A
& \msc{10.01}{0.58} & \msc{10.05}{0.51} & N/A & N/A \\
DeepProbLog
& \msc{97.42}{0.29} & N/A & N/A & N/A
& \msc{80.54}{0.21} & N/A & N/A & N/A \\
ABL-hamming
& \msc{98.49}{0.22} & \msc{98.65}{0.08} & \msc{98.63}{0.12} & \msc{60.41}{40.27}
& \msc{91.48}{0.54} & \msc{92.43}{0.65} & \msc{63.51}{36.91} & \msc{47.74}{37.13} \\
ABL-confidence
& \msc{79.31}{41.35} & \msc{79.32}{41.61} & \msc{77.10}{43.53} & \msc{79.22}{41.65}
& \msc{86.32}{25.40} & \msc{79.80}{34.91} & \msc{72.76}{38.75} & \msc{54.84}{42.97} \\
\AthreeBL
& \msc{98.92}{0.15} & \msc{99.03}{0.09} & \msc{98.68}{0.17} & \msc{98.42}{0.08}
& \msc{94.84}{0.25} & \msc{94.31}{0.21} & \msc{93.16}{1.00} & \msc{92.97}{0.44} \\
ACR (ours)
& \msc{98.95}{0.04} & \msc{99.06}{0.18} & \msc{99.02}{0.07} & \msc{98.93}{0.08}
& \msc{94.44}{0.28} & \msc{95.12}{0.41} & \msc{94.44}{0.18} & \msc{94.35}{0.50} \\
\bottomrule
\end{tabular*}
\end{table}

\begin{figure}[t]
    \centering
    \includegraphics[width=1\textwidth]{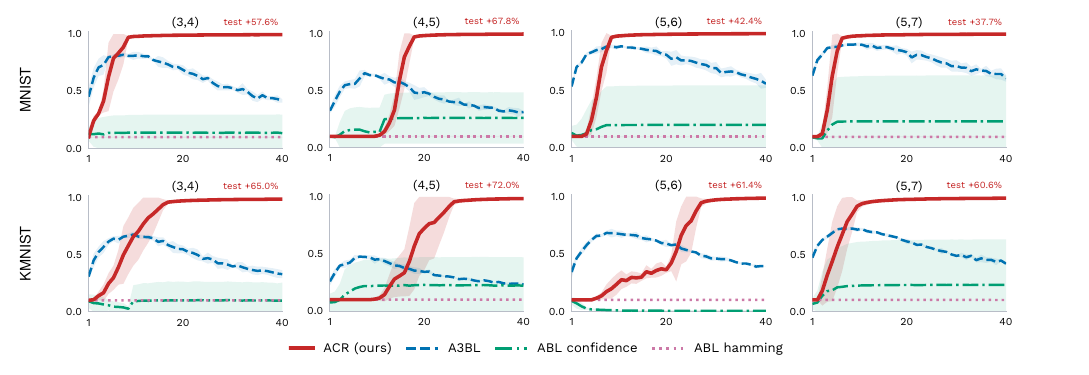}
    \caption{Training curves for four representative modulus pairs per dataset.
    Annotations show ACR's accuracy gain in percentage points over the strongest baseline.
    Full results in Appendix~\ref{app:aggregated_results}.}
    \label{fig:selected_aggregated_trajectories}
\end{figure}

\section{Experiments}
\label{sec:experiments}
We empirically evaluate ACR through two questions:

\begin{itemize}
    \item How does candidate retention compare with the
    single-candidate and posterior-weighted policies when valid candidates disagree on
    hidden concepts?

    \item What roles do the two terms in the objective play, and
    how does the retained set change during training?
\end{itemize}

Section~\ref{sec:exp_effectiveness} compares retention policies on digit addition~\cite{manhaeve_deepproblog_2018} and aggregated
mod addition~\cite{he2025AnalysisOnNeSy}.
Section~\ref{sec:exp_retention} examines the two objective terms through ablations and
retention dynamics. Section~\ref{sec:exp_scope} examines applications to chess attack~\cite{dai_abl_2019} and judicial sentencing~\cite{huang2020ssabl}.

\subsection{Learning from Ambiguous Explanations}
\label{sec:exp_effectiveness}

We begin with two arithmetic settings where the same observed target admits multiple valid digit assignments.

In digit addition, each
example contains two sequences of \(n\) digit images whose hidden labels form two
integers; training observes only their sum~\cite{manhaeve_deepproblog_2018}.
For \(n=1\), \texttt{SUM(}\IMGI\texttt{,}\IMGII\texttt{)=3} reveals the sum
\texttt{3}, while the image labels remain hidden.
MNIST~\cite{lecun1998gradient} and KMNIST~\cite{kmnist} supply the images for
\(n\in\{1,2,3,4\}\). We report character accuracy on the recovered digit
labels, averaged over five runs. As \(n\) grows, more hidden decisions must agree with the same target.

For these arithmetic tasks, we compare ACR with the single-candidate
ABL-hamming and ABL-confidence~\cite{dai_abl_2019}, and
\AthreeBL~\cite{he2024a3bl}. Digit addition also includes DeepProbLog, NeurASP,
and DeepStochLog~\cite{manhaeve_deepproblog_2018,neurasp,deepstochlog}.
For aggregated mod addition, the \AthreeBL runs use a top-32 cap and temperature \(0.2\), retaining all candidates when at most 32 are available.

Both ACR and \AthreeBL maintain high character accuracy across operand lengths, whereas the hard-selection methods have lower means or larger variation across runs (Table~\ref{tab:addition}). On KMNIST, ACR exceeds \AthreeBL for \(n\geq2\), with gains of 0.81--1.38 percentage points; \AthreeBL performs better at \(n=1\).

\begin{figure}[t]
    \centering
    \includegraphics[width=0.96\linewidth]{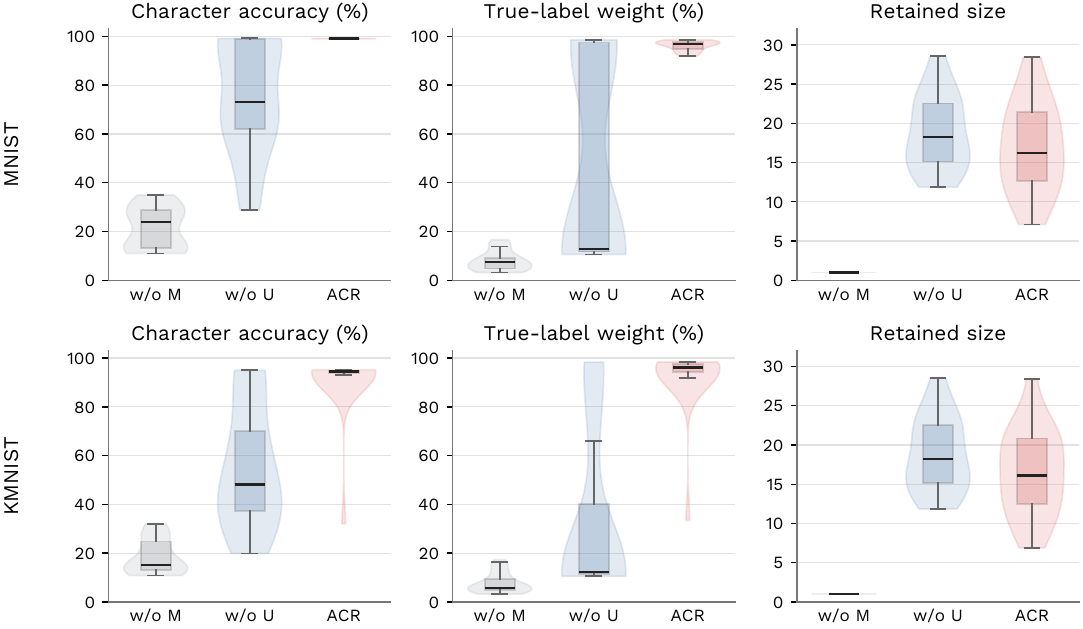}
    \caption{Task-wise ablation distributions across 19 modulus pairs per dataset.
    Columns: character accuracy, TLM, and retained set size.
    ACR w/o mass and ACR w/o uncertainty (w/o M and w/o U in the panels) remove the mass and uncertainty terms.}
    \label{fig:objective_ablation_distributions}
\end{figure}

\begingroup\clubpenalty=10000
Digit addition tests scaling in the number of hidden positions but uses a
single rule per example. Aggregated mod addition introduces a harder coupling:
a shared digit predictor must satisfy two modular rules simultaneously.
For a modulus \(k\), each example
contains two one-digit images with hidden labels \(z_1,z_2\) and one observed
residue:
\[
y_k=(z_1+z_2)\bmod k.
\]
\par\endgroup
Each training example samples \(k\in\{k_1,k_2\}\) and all examples share one
digit predictor, so \texttt{MODSUM\_3(}\IMGI\texttt{,}\IMGII\texttt{)=0}
and \texttt{MODSUM\_4(}\IMGI\texttt{,}\IMGII\texttt{)=3} follow different
modular rules within the same training set. We evaluate 19 modulus pairs with
30,000 examples and three runs per pair~\cite{he2025AnalysisOnNeSy}.

The representative curves in Figure~\ref{fig:selected_aggregated_trajectories} reveal a difference in learning dynamics: ACR maintains high accuracy after the initial rise, while \AthreeBL declines after an earlier peak in some settings. Across the evaluated modulus pairs, ACR improves concept accuracy over \AthreeBL in most settings. The gains vary across pairs; Appendix~\ref{app:aggregated_results} reports the complete results, including small or negative gains.

\subsection{Understanding Candidate Retention}
\label{sec:exp_retention}

The retention objective \(J_\theta\) balances retained uncertainty against discarded posterior mass.
We study these roles through controlled ablations of the two terms and a training trace of the resulting supervision.
Both use retained true-label weight (TLM) as a diagnostic. Let
\(z^\star=(z_j^\star)_{j=1}^m\) denote the ground-truth concepts; we define
\begin{equation*}
\mathrm{TLM}_\theta(R;\x,y)
=\frac{1}{m}\sum_{j=1}^{m}\sum_{\c\in R}
\widetilde w_\theta(\c\mid\x,y)\Ind[c_j=z_j^\star].
\end{equation*}
Here \(\widetilde w_\theta\) is the candidate distribution used in the experiments (Appendix~\ref{app:experimental_protocol}). TLM averages the retained weight assigned to the correct concept without renormalizing over \(R\); it therefore reflects both retained mass and label quality.

The ablations in Table~\ref{tab:objective_ablation} compare three
retention policies under a shared training protocol. Without the mass term,
the initial singleton already minimizes uncertainty, \(\Unc_\theta=0\),
so the strict-improvement rule accepts no additions. Without uncertainty,
each positive-weight addition reduces discarded mass, and the rule retains
the full search pool of \(\min(32,|\s|)\) candidates. Both controls use the
same scoring and raw-weight supervision as ACR.

Averaged over the aggregated mod-addition settings, ACR reaches 94.98\% concept accuracy, compared with 20.01\% for ACR w/o mass and 65.22\% for ACR w/o uncertainty (Table~\ref{tab:objective_ablation}). Removing either term also lowers retained true-label weight. ACR retains fewer candidates on average than ACR w/o uncertainty while assigning more weight to correct labels. Figure~\ref{fig:objective_ablation_distributions} shows how these differences vary across modulus pairs.

\begin{table}[t]
\centering
\caption{Objective ablation averaged over all aggregated mod-addition settings (19 modulus pairs $\times$ 2 datasets).
ACR achieves the highest accuracy and TLM; removing either term degrades both.}
\label{tab:objective_ablation}
\small
\renewcommand{\arraystretch}{1.12}
\begin{tabular}{lrrr}
\toprule
Method & Acc.\,(\%) & TLM\,(\%) & Size \\
\midrule
ACR w/o mass & 20.01 & 7.88 & 1.00 \\
ACR w/o uncertainty & 65.22 & 40.15 & 18.76 \\
ACR & \textbf{94.98} & \textbf{94.25} & 16.68 \\
\bottomrule
\end{tabular}
\end{table}

\begin{figure}[tbp]
    \centering
    \includegraphics[width=0.65\linewidth]{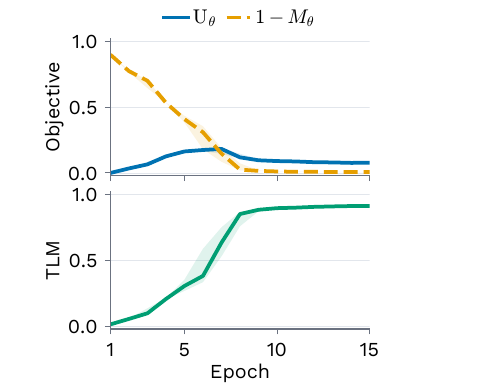}
    \caption{Training dynamics on MNIST mod addition \((3,4)\).
    Top: retained uncertainty \(\Unc_\theta\) and discarded mass \(1-M_\theta\);
    bottom: TLM. Lines: medians; bands: interquartile ranges.}
    \label{fig:retention_dynamics}
\end{figure}

The representative trace in Figure~\ref{fig:retention_dynamics} follows the retained supervision on MNIST mod addition \((3,4)\). Retained uncertainty rises early and then falls, while discarded mass approaches zero. In this trace, sharper coordinate targets coexist later in training with coverage of nearly all model-posterior mass.

\subsection{Beyond Arithmetic Reasoning}
\label{sec:exp_scope}

We next test ACR on two non-arithmetic tasks: chess attack and judicial
sentencing, comparing with ABL-hamming, ABL-confidence~\cite{dai_abl_2019},
and \AthreeBL~\cite{he2024a3bl}.
In chess attack, MNIST images occupy board squares and map
to six piece types~\cite{dai_abl_2019}. Training provides square
locations and whether any pair attacks; piece labels remain hidden. We report
attack and piece accuracy from a single run on about 60,000 training and 10,000 test boards. All
methods share the split and 20-epoch schedule; ACR uses five warm-up epochs of \AthreeBL supervision before switching to retention-based selection.

Judicial sentencing takes judgment facts as input and predicts discrete sentencing
factors, which a knowledge base maps to the observed prison term~\cite{huang2020ssabl}.
We follow the task setup of Huang et al. and report Macro-F1 and Micro-F1, averaged over five runs.

\begin{table}[t]
\centering
\small
\setlength{\tabcolsep}{2.6pt}
\renewcommand{\arraystretch}{1.12}
\caption{Results beyond arithmetic. Best in bold.}
\label{tab:beyond_arithmetic}
\begin{tabular}{lrrrr}
\toprule
& \multicolumn{2}{c}{Chess (\%)} & \multicolumn{2}{c}{Judicial} \\
\cmidrule(lr){2-3}\cmidrule(lr){4-5}
Method & Attack & Piece & Macro-F1 & Micro-F1 \\
\midrule
ABL-hamming & 69.01 & 17.95 & 0.32470 & 0.70256 \\
ABL-confidence & 69.01 & 17.95 & 0.32470 & 0.70256 \\
\AthreeBL & 97.53 & 87.19 & 0.83224 & 0.89574 \\
ACR (ours) & \textbf{99.31} & \textbf{98.29} & \textbf{0.83309} & \textbf{0.89870} \\
\bottomrule
\end{tabular}
\end{table}

\begingroup\clubpenalty=10000
Warm-started ACR improves more on piece identification than on attack prediction (Table~\ref{tab:beyond_arithmetic}). Both marginalization-based methods outperform hard selection, while the judicial task leaves less room for adaptive retention to differentiate from \AthreeBL.
\par\endgroup

\section{Related Work}
\label{sec:related-work}

In this section, we review the two lines of work most relevant to our method: neuro-symbolic learning
and abductive learning.

\Needspace{4\baselineskip}
\subsection{Neuro-Symbolic Learning}

Neuro-symbolic methods connect perception and reasoning through architectures, loss functions, or explicit inference. Early systems embedded declarative rules in connectionist architectures~\cite{kbANN94,nsfa2002}. Logic-as-loss methods translate symbolic constraints into differentiable objectives~\cite{xu_semantic_2018,roychowdhury2021regularizing}; their behavior depends on the chosen relaxation~\cite{vanKrieken2022Analyzing,he2024rill}.

Hybrid systems retain an explicit symbolic reasoner. DeepProbLog extends ProbLog with neural predicates~\cite{manhaeve_deepproblog_2018,deepproblog_aij}, NeurASP couples neural perception with answer-set programming~\cite{neurasp}, and DeepStochLog uses stochastic logic programs~\cite{deepstochlog}. They share a modular connection between neural perception and symbolic reasoning, with different mechanisms for exchanging information. In ABL, abduction returns explanations that become labels for the neural update. This feedback interface makes the candidate-to-supervision step explicit: several valid joint assignments can supply different labels for the same inputs. ACR studies which of these explanations should contribute to the returned supervision and how their labels combine into concept targets.

\subsection{Abductive Learning}

ABL turns abduction, inferring hypotheses that explain observations under
background knowledge~\cite{Magnani09Abductive}, into a learning mechanism
whose central aim is to make perception and reasoning mutually
beneficial~\cite{zhou2019abductive,dai_abl_2019}. The neural model predicts
concepts, abduction revises them under the knowledge base, and the revised
candidates supervise the next neural update.
When several candidates explain the same target, classical ABL selects the one
closest to the current prediction, using Hamming distance or model
confidence. This decision depends on model quality: at cold start, an
unreliable prediction can select a poor pseudo-label, which subsequent updates
may reinforce~\cite{tao2024abl,he2024a3bl}. ABLSim mitigates this dependence
with inter-sample similarity that guides consistency optimization before the
classifier becomes reliable~\cite{huang2021ablsim}.

\begingroup\widowpenalty=10000
Subsequent work extends the loop along three axes. (i)~\emph{Knowledge scope}: GABL enables abduction from inexact ground
propositions~\cite{cai2021gabl}, and related methods learn from raw data,
semi-supervised signals, or inaccurate
rules~\cite{dai2021ablRawData,huang2020ssabl,yang2024safeabl}. The framework
has also been applied to PDE discovery~\cite{ABLPDE2025Gao} and document
recognition~\cite{KESAR2024Gao}.
(ii)~\emph{Identifiability}: formal analyses connect learning reliability to
knowledge-base structure~\cite{tao2024abl,he2025AnalysisOnNeSy}.
(iii)~\emph{Reasoning control}: C-ABL partitions the knowledge base into
sub-bases introduced progressively~\cite{hu2025curriculumabl}.
\par\endgroup

These advances improve what knowledge is available or how reasoning is
structured, but leave a separate design choice: how much of the candidate
posterior should return to the neural model as supervision. This
choice matters because target-level consistency need not identify the intended
latent concepts: different candidates can explain one target while assigning
different concept labels, producing reasoning
shortcuts~\cite{marconato23ReasoningShortcuts,bortolotti2024RSbench}.
\AthreeBL addresses this ambiguity by weighting every valid candidate under the
current model and marginalizing their coordinate labels~\cite{he2024a3bl},
avoiding an early single-candidate commitment. Classical ABL and \AthreeBL
therefore define the single-candidate and marginalization endpoints.
ACR develops the
intermediate range through a greedy subset-selection criterion combining supervision
sharpness and posterior-mass coverage.

\section{Limitations and Future Directions}

ACR operates on the candidate set returned by abduction. Enumeration can become the bottleneck when this space grows combinatorially. The criterion uses unit coefficients, and task-dependent weighting remains unexplored. Our comparisons use fixed \(K\) and temperature; validation-selected settings and retained-size-matched controls would further characterize the benefit of the selection rule. The chess result also calls for repeated runs with matched warm-up schedules. Candidate retention alone does not establish identifiability of the intended concept semantics, which is central to reasoning shortcuts~\cite{marconato23ReasoningShortcuts}.

Future directions include jointly generating and retaining explanations, learning the relative weighting of uncertainty and mass, and extending retention to inaccurate rules or partial labels.

\section{Conclusion}

Candidate retention shapes concept supervision through the labels that explanations support jointly. ACR evaluates each addition by the model mass it recovers and the change it makes to coordinate uncertainty. Our risk bound separates these two selection-dependent quantities from model mismatch; the greedy rule uses them to construct a retained set for each training pair. Experiments show concept-accuracy gains over single-candidate baselines and \AthreeBL in most evaluated aggregated mod-addition settings. Objective ablations support the complementary roles of mass and uncertainty in retaining alternatives while keeping concept targets informative.

\bibliography{references}
\clearpage

\appendix
\section*{Appendix}

\section{Proofs}
\label{app:proofs}

\subsection{Fixed-Support EM Connection}
\label{app:restricted-em}
This subsection considers the product-posterior choice
\(w_\theta(\c\mid\x,y)=p_\theta(\c\mid\x)/Z_\theta(\x,y)\)
and normalized retained targets. For a fixed positive-mass set \(R\), define
\[
\log Z_{\theta,R}(\x,y)=\log\sum_{\c\in R}p_\theta(\c\mid\x).
\]
The following identity is standard in truncated variational EM~\citep{lucke2019truncated}.

\begin{proposition}
\label{prop:restricted-em}
With product-posterior weights and \(R\) fixed during an update, computing
\(q_t=q_{\theta_t,R}\) performs the exact E-step for
\(\log Z_{\theta,R}(\x,y)\). Holding \(q_t\) fixed, an update that reduces
\[
\Ls_{\mathrm{ret}}(\theta;q_t)
=-\Exp_{\c\sim q_t}[\log p_\theta(\c\mid\x)]
\]
performs a generalized M-step.
\end{proposition}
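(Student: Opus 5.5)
The plan is to apply the standard variational (ELBO) decomposition to the restricted log-partition \(\log Z_{\theta,R}(\x,y)\), treating \(R\) as a fixed latent support. For any distribution \(q\) supported on \(R\), Jensen's inequality, or equivalently the exact identity, gives
\begin{equation*}
\log Z_{\theta,R}(\x,y)
=\underbrace{\Exp_{\c\sim q}\bigl[\log p_\theta(\c\mid\x)\bigr]+H(q)}_{F(q,\theta)}
+\mathrm{KL}\bigl(q\,\big\|\,p_\theta(\cdot\mid\x,R)\bigr),
\end{equation*}
where \(p_\theta(\c\mid\x,R)=p_\theta(\c\mid\x)\Ind[\c\in R]/Z_{\theta,R}(\x,y)\). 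I will verify this identity directly by expanding the KL term, substituting the definition of the restricted posterior, and using \(\sum_{\c\in R}q(\c)=1\). This step is routine.

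\textbf{E-step.} The next step is to identify the restricted posterior with \(q_{\theta,R}\). Under product-posterior weights, \(w_\theta=p_\theta/Z_\theta\), so
\begin{equation*}
q_{\theta,R}(\c\mid\x,y)=\frac{p_\theta(\c\mid\x)/Z_\theta}{\sum_{\c'\in R}p_\theta(\c'\mid\x)/Z_\theta}\Ind[\c\in R]=p_\theta(\c\mid\x,R).
\end{equation*}
The global normalizer \(Z_\theta\) cancels. Consequently, at \(\theta=\theta_t\), setting \(q_t=q_{\theta_t,R}\) makes the KL term zero. The lower bound \(F(q,\theta_t)\) is then maximized over distributions on \(R\) and is tight, \(F(q_t,\theta_t)=\log Z_{\theta_t,R}\). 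This is exactly the E-step.

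\textbf{M-step.} With \(q_t\) held fixed, \(H(q_t)\) does not depend on \(\theta\), so \(F(q_t,\theta)=-\Ls_{\mathrm{ret}}(\theta;q_t)+\text{const}\). Any update \(\theta_{t+1}\) with \(\Ls_{\mathrm{ret}}(\theta_{t+1};q_t)\le\Ls_{\mathrm{ret}}(\theta_t;q_t)\) therefore does not decrease \(F\). This is a generalized M-step in the sense of Dempster et al. The monotonicity chain follows:
\begin{equation*}
\log Z_{\theta_{t+1},R}\ge F(q_t,\theta_{t+1})\ge F(q_t,\theta_t)=\log Z_{\theta_t,R}.
\end{equation*}

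I would also check consistency with Section~\ref{sec:retained-supervision}. Because \(\log p_\theta(\c\mid\x)=\sum_j\log f_\theta(c_j\mid x_j)\) factorizes, linearity of expectation gives \(-\Exp_{q_t}[\log p_\theta]=\sum_j\sum_k\pi_{t,j}(k)[-\log f_\theta(k\mid x_j)]\), which is the marginalized loss. The main obstacle is interpretive rather than technical. The statement must be read with \(R\) fixed across the update, because greedy retention re-selects \(R\) after each step. The claim therefore applies only to the fixed-support objective \(\log Z_{\theta,R}\), not to the full \(\log Z_\theta\) and not across changes of \(R\). I will state this scope explicitly.
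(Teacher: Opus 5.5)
Your proposal is correct and follows essentially the same route as the paper: the variational identity \(\log Z_{\theta,R}=\mathcal F(q,\theta)+\mathrm{KL}(q\|q_{\theta,R})\), tightness at \(q_t=q_{\theta_t,R}\) for the E-step, and constancy of \(H(q_t)\) in \(\theta\) for the generalized M-step. The points the paper leaves implicit are also correct as you state them: \(Z_\theta\) cancels so that \(q_{\theta,R}\) is the restricted posterior, the monotonicity chain holds, and the loss factorizes back to the marginalized targets.
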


\paragraph{Proof of Proposition~\ref{prop:restricted-em}.}
\begin{proof}
For any distribution \(q\) supported on \(R\), the standard variational
identity gives
\begin{equation*}
\log Z_{\theta,R}(\x,y)
=\mathcal F_{\x,y}(q,\theta)
+\mathrm{KL}\!\left(q\middle\|q_{\theta,R}\right),
\end{equation*}
where \(\mathcal F_{\x,y}(q,\theta)
=\Exp_{\c\sim q}[\log p_\theta(\c\mid\x)]+H(q)\).
At \(\theta=\theta_t\), setting \(q=q_{\theta_t,R}\) makes the KL term
zero and the bound tight: this is the exact E-step.
With \(q_t\) fixed, \(H(q_t)\) is constant in \(\theta\), so reducing
\(\Ls_{\mathrm{ret}}(\theta;q_t)=-\Exp_{q_t}[\log p_\theta(\c\mid\x)]\)
raises \(\mathcal F\) and thus the likelihood bound, giving a generalized
M-step.
\end{proof}

The likelihood statement concerns a fixed support and these normalized
product-posterior targets. The task-specific score and raw-weight convention
used for aggregated mod addition are specified below.

\section{Experimental Protocol}
\label{app:experimental_protocol}

\paragraph{Hardware.}
All experiments were run on a server with two NVIDIA A100-PCIE GPUs
(40\,GB each) and two Intel Xeon Platinum 8358 CPUs.

\paragraph{Repeated runs.}
Digit addition reports mean and standard deviation over five
runs. Aggregated mod addition, its
ablations, and retention dynamics each use three
runs; Figure~\ref{fig:retention_dynamics} shows medians and
interquartile ranges. Chess attack uses one run with a 20-epoch schedule;
ACR uses five warm-up epochs of \AthreeBL supervision. Judicial sentencing
averages over five runs.
\paragraph{Comparison methods.}
Comparison methods use their official implementations, including the
task-specific data splits and hyperparameters, following the evaluation
protocols described above. For aggregated mod addition, ACR uses \(K=32\) and a
temperature of \(0.2\), the same settings as \AthreeBL. The chess warm-up
schedule is stated alongside its results.

\paragraph{Reproducibility.}
The supplementary materials include data-preprocessing, training, and
analysis code, together with task-specific configurations and instructions
for reproducing all reported results.

\section{Full Aggregated Mod-Addition Results}
\label{app:aggregated_results}

We follow the aggregated mod-addition setup of
\citet{he2025AnalysisOnNeSy} and evaluate all 19 modulus pairs classified as
learnable. Figures~\ref{fig:aggregated_mnist_rq} and
\ref{fig:aggregated_kmnist_rq}
report training trajectories for all pairs. ACR achieves higher accuracy than \AthreeBL on most pairs in both datasets. The largest gap in favor of the baseline occurs at \((2,7)\), where \AthreeBL leads by about three points.

\begin{figure}[htbp]
    \centering
    \includegraphics[width=0.95\linewidth]{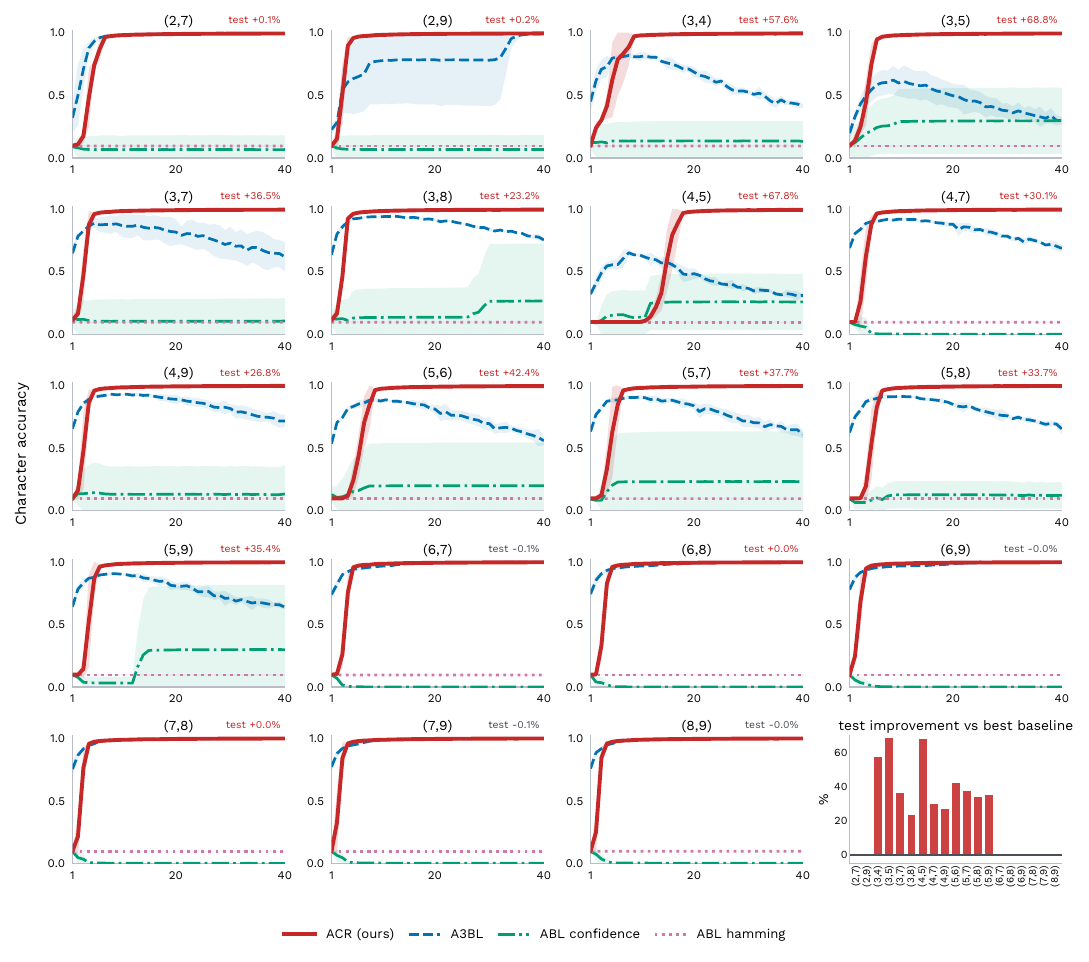}
    \caption{Character-accuracy trajectories for all 19 MNIST modulus pairs. The
    bottom-right panel summarizes ACR's test gain over the strongest baseline.}
    \label{fig:aggregated_mnist_rq}
\end{figure}

\begin{figure}[htbp]
    \centering
    \includegraphics[width=0.95\linewidth]{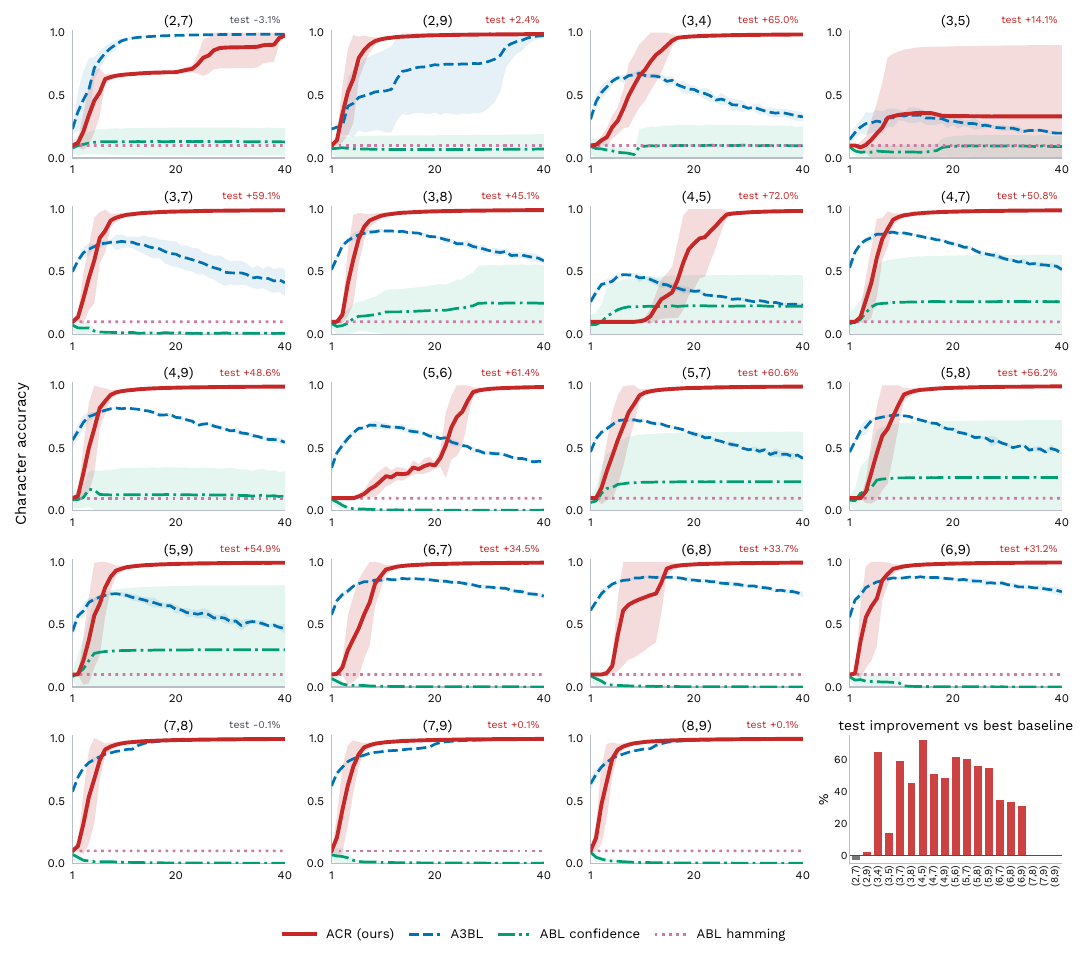}
    \caption{Character-accuracy trajectories for all 19 KMNIST modulus pairs. The
    bottom-right panel summarizes ACR's test gain over the strongest baseline.}
    \label{fig:aggregated_kmnist_rq}
\end{figure}

\end{document}